\documentclass[sigconf,screen,authorversion,nonacm]{acmart}

\AtBeginDocument{%
  }

\usepackage{multirow}
\usepackage{comment}
\usepackage{amsmath}    
\usepackage{algorithm}  
\usepackage{algorithmic}
\usepackage{subcaption}
\usepackage{graphicx}
\usepackage{xcolor}
\usepackage[table]{xcolor}
\usepackage{caption}

\definecolor{highlightblue}{RGB}{230, 240, 255} 

\usepackage{titlesec}
\titlespacing*{\section}{0pt}{1\baselineskip}{0.1\baselineskip}
\titlespacing*{\subsection}{0pt}{0.5\baselineskip}{0.1\baselineskip}
\titlespacing*{\subsubsection}{0pt}{0.1\baselineskip}{0.1\baselineskip}

\usepackage{enumitem}

\begin{document}





\title{Retrieval-Augmented Foundation Models for Matched Molecular Pair Transformations to Recapitulate Medicinal Chemistry Intuition}

\begin{abstract}
Matched molecular pairs (MMPs) captures the local chemical edits that medicinal chemists routinely use to design analogs, but existing ML approaches either operate at the whole-molecule level with limited edit controllability or learn MMP-style edits from restricted settings and small models. 
We propose a variable-to-variable formulation of analog generation and train a foundation model on large-scale MMP transformations (MMPTs) to generate diverse variables conditioned on an input variable. 
To enable practical control, we develop prompting mechanisms that let the users specify preferred transformation patterns during generation. 
We further introduce MMPT-RAG, a retrieval-augmented framework that uses external reference analogs as contextual guidance to steer generation and generalize from project-specific series. 
Experiments on general chemical corpora and patent-specific datasets demonstrate improved diversity, novelty, and controllability, and show that our method recovers realistic analog structures in practical discovery scenarios.
\end{abstract}


\author{Bo Pan}
\affiliation{%
 \institution{Department of Computer Science, Emory University}
 \city{Atlanta}
 \state{GA}
 \country{USA}}
 \email{bo.pan@emory.edu}

\author{Peter Zhiping Zhang}
\affiliation{%
 \institution{Merck \& Co., Inc.}
 \city{Rahway}
 \state{NJ}
 \country{USA}}
 \email{zhiping.peter.zhang@merck.com}

\author{Hao-Wei Pang}
\affiliation{%
 \institution{Merck \& Co., Inc.}
 \city{Rahway}
 \state{NJ}
 \country{USA}}
 \email{hao-wei.pang@merck.com}

\author{Alex Zhu}
\affiliation{%
 \institution{Department of Computer Science, Emory University}
 \city{Atlanta}
 \state{GA}
 \country{USA}}
 \email{alex.zhu@emory.edu}

\author{Xiang Yu}
\affiliation{%
 \institution{Merck \& Co., Inc.}
 \city{Rahway}
 \state{NJ}
 \country{USA}}
 \email{xiang.yu2@merck.com}
 
\author{Liying Zhang}
\affiliation{%
 \institution{Merck \& Co., Inc.}
 \city{Rahway}
 \state{NJ}
 \country{USA}}
 \email{liying.zhang@merck.com}

\author{Liang Zhao}
\affiliation{%
 \institution{Department of Computer Science, Emory University}
 \city{Atlanta}
 \state{GA}
 \country{USA}}
 \email{liang.zhao@emory.edu}

\renewcommand{\shortauthors}{Pan, Zhang, Zhao et al.}







\maketitle
\begin{figure}[t]
    \centering
    \vspace{3mm}\hspace{-2mm}\includegraphics[width=1.0\linewidth]{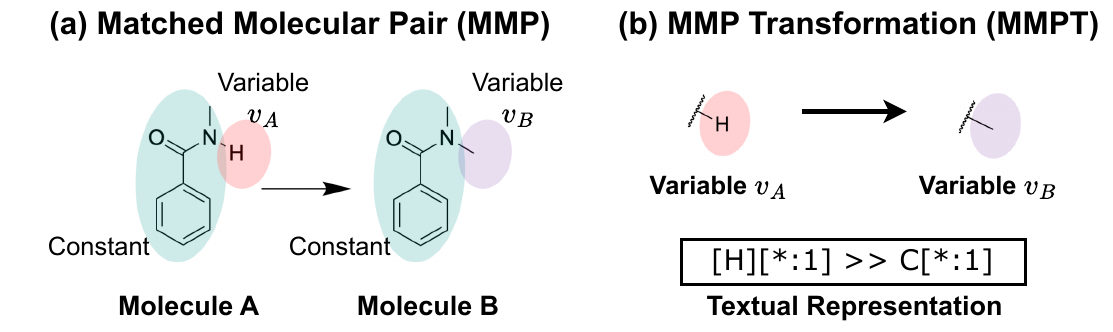}
    \caption{An example of (a) Matched Molecular Pairs (MMP); (b) Matched Molecular Pair Transformation (MMPT) and its textual representation.}
    \vspace{-6mm}
    \label{fig:mmp}
\end{figure}

\section{Introduction}

{In drug discovery, a fundamental strategy to optimize lead molecules is \textit{analog design}, which involves medicinal chemists leveraging their intuition to make localized, knowledge-driven edits to existing molecules, instead of designing entirely novel molecules \citep{fischer2010analogue, choung2023extracting, shurtleff2024invention}. Machine learning models for molecular optimization have increasingly adopted transformation-based formulations, learning to transform one molecule into another through graph edits or sequence-to-sequence generation \citep{tysinger2023can, tibo2024exhaustive, he2022transformer, he2021molecular}. However, most such approaches treat transformations as implicit, molecule-level operations, without distinguishing which edits correspond to chemically meaningful local modifications versus arbitrary global rewrites \citep{ozcelik2025generative}. In contrast, medicinal chemists typically reason in terms of \textbf{matched molecular pairs (MMPs)} \cite{dossetter2013matched, yang2023matched, fischer2010analogue, choung2023extracting, andreev2020discovery, meanwell2017synopsis, papadatos2010lead, griffen2011matched}, which are pairs of molecules that differ by discrete and minimal modifications, such as R-group substitutions or core replacements, that preserve synthetic feasibility and support interpretable structure–activity comparisons, as shown in Fig.~\ref{fig:mmp}~(a).  
Abstracting away the constant chemical context and focusing solely on the localized edit yields an \textbf{MMP transformation (MMPT)}, which captures a context-independent medicinal chemistry modification corresponding to a single, well-defined \textit{variable-to-variable} change, as an example shown in Fig.~\ref{fig:mmp}~(b). MMPTs directly reflect how chemists use their medicinal chemistry intuition to optimize lead molecules, and provide a principled way to represent and learn transferable medicinal chemistry modifications across different molecular contexts.

While MMPTs have appeared occasionally in machine learning-based molecular optimization, they remain underexploited as a first-class generative representation: prior models are typically limited in scale or trained with specific data \citep{chen2021molecule, jin2020multi}, or embedded within molecule-level generators where MMP relationship is only weakly enforced \citep{tysinger2023can, tibo2024exhaustive, he2022transformer, he2021molecular}. At the same time, recent advances in large-scale chemical corpora~\cite{gaulton2012chembl, sun2017excape, yu2024llasmol} and foundation models~\cite{achiam2023gpt, bai2023qwen} make it newly feasible to learn generalizable, MMPT-level priors directly from millions of real data points. This convergence creates a timely opportunity to revisit MMPTs: not as an auxiliary constraint, but as the central abstraction for controllable and scalable analog generation.}

Despite their conceptual appeal, realizing MMPT-centric learning within modern ML systems poses several technical challenges. First, many existing analog generation models are optimized for molecule-level similarity rather than explicit localized modifications~\citep{tibo2024exhaustive, he2022transformer}, making it difficult to guarantee that generated candidates differ from the input by a single, well-defined transformation. As a result, users cannot reliably specify where a modification should occur (e.g., which R-group or core), nor ensure that unintended global changes are avoided. Second, prior MMPT-based learning approaches are often constrained to small models, limited transformation vocabularies, or narrowly curated datasets, which restrict their ability to absorb the long-tailed, heterogeneous transformation patterns observed across large chemical corpora. Third, existing controllable methods, whether graph-editing models with fixed operators \citep{chen2021molecule, jin2020multi} or prompt-based use of large language models \citep{wu2024leveraging, wang2024efficient, brown2020language}, lack mechanisms to learn transferable, transformation-level priors that generalize across scaffolds while remaining interpretable and synthetically plausible. Finally, practical deployment in medicinal chemistry requires models to adapt to user- or project-specific preferences, such as emphasizing rare but relevant modifications or following established series patterns, without costly retraining. Addressing these challenges requires moving beyond molecule-level generators toward scalable, controllable models that operate directly in the MMPT space.

Motivated by the above complementarity and limitations, we aim to build a practical MMPT-centric generation framework that synergizes chemists' intuition with the ability of modern ML to learn from massive data. First, we train an MMPT foundation model on large-scale transformation data to generate variables conditioned on input variables, enabling high-throughput analog design in a user-controlled edit region.
Second, we develop prompting mechanisms that expose substructure-level control to users, allowing them to customize the structural patterns of generated variables.
Third, we introduce an MMPT-RAG framework that incorporates external reference datasets: retrieved analogs are organized into structural clusters and used to reweight the generation distribution, improving coverage of infrequent yet chemically meaningful transformations while preserving plausibility.

Our main contributions are summarized as follows:
\begin{itemize}
    \item We formalize analog generation in the matched molecular pair transformation (MMPT) space, treating analog design as context-independent local edits that can be composed across diverse molecular scaffolds.
    \item We train a foundation model on large-scale MMPTs extracted from a broad corpus of drug-like molecules, and enable controllable generation via prompting mechanisms that specify desired transformation templates or structural patterns.
    \item We propose an MMPT-RAG framework that leverages external reference datasets by retrieving structurally related examples and using them as contextual guidance to steer generation toward user-preferred patterns.
    \item We validate our approach on three complementary MMPT benchmarks: an in-distribution setting, a within-patent analog expansion setting, and a cross-patent generalization setting. Across tasks, our method consistently improves recall of ground-truth transformations while maintaining high validity and producing non-trivial novel edits.
\end{itemize}

\section{Related Work}

\subsection{MMP-Based Analog Generation}

Most of the existing MMP-based analog design methods explore the setting of molecule-level generation, i.e., generating entire molecules that are similar to the given molecule
\citep{jin2018learning, tysinger2023can, tibo2024exhaustive, he2022transformer, he2021molecular}, and thus this stream of methods does not support users specifying a substructure (variable) to edit. Some existing methods also allow users to specify a structure to change, with one stream of work formulating it as a partial molecule generation problem, where the user-provided substructure is fixed, and the model is tasked with completing the remaining molecule, including those VAE-based graph generative models which operate via node generation \cite{jin2018learning, jin2020multi, lim2020scaffold, you2018graph} and auto-regressive generative models that implements it as a token generation task in the SMILES space \cite{olivecrona2017molecular, loeffler2024reinvent}. With the advancements of large language models (LLMs), some work also explored leverate LLMs' zero-shot ability to suggest variable replacements \citep{wu2024leveraging, wang2024efficient, brown2020language}. Among these methods, the LibINVENT module \citep{loeffler2024reinvent} in REINVENT 4 \cite{loeffler2024reinvent} stands out as a strong baseline with a wide industrial applicability, benefiting from its large training corpus. However, all the above methods are trying to learn the conditioned molecular completion task (constant to variable). To our best knowledge, no existing method tries to directly learn a context-independent variable replacement objective (variable to variable).

\subsection{Retrieval-Augmented Generation for Molecule Generation}

Recent advances in retrieval-augmented generation (RAG) have been applied with notable success to molecular and materials design. For example, RetMol \citep{wang2022retrieval} uses an exemplar retrieval module to guide a pretrained molecular generator by fusing input compounds with retrieved analogues, enabling efficient design of molecules satisfying complex properties without task-specific fine-tuning. \cite{lee2024molecule} propose f-RAG, which retrieves both “hard” and “soft” fragment contexts to steer a fragment-based generative model, thereby improving diversity and design novelty. Structure-based methods such as Rag2Mol \citep{zhang2025rag2mol} and IRDiff \citep{huang2024interaction} integrate retrieval of known ligands or fragments and inject them into 3D generation via autoregressive graph or diffusion models, aligning generation to target binding pockets. \cite{xu2025reimagining} further extend this paradigm by introducing READ, an SE(3)-equivariant diffusion model aligned with retrieval of scaffold embeddings to enhance geometric and chemical realism. Finally, \cite{krotkov2025nanostructured} demonstrate the flexibility of RAG techniques in materials science, coupling literature retrieval with LLM-based generative reasoning for nanostructured material design.

\section{Problem Formulation}
\subsection{MMPTs as the Generative Unit}
In medicinal chemistry, analog design proceeds by modifying an existing compound through a single, localized structural change, such as replacing a substituent, linker, or core, while keeping the remainder of the molecule fixed. 
This concept is formalized through matched molecular pair transformations (MMPTs). An MMPT is defined as the transformation linking a pair of molecules that share an invariant chemical context, while differing by a single, well-defined variable fragment, as illustrated in Fig.~\ref{fig:mmp}.
Formally, an MMPT can be represented as
$(v_A\to v_B)$,
where $v_A$, $v_B$ are referred to as the \textbf{\textit{variables}} before and after the transformation. By construction, MMPTs isolate minimal chemical edits that are synthetically feasible and empirically validated through historical discovery efforts. As such, MMPTs constitute the primary unit through which medicinal chemists reason about structure–activity relationships (SAR) and explore local chemical space.

\subsection{Problem Definition: MMPT-Centric Analog Generation}
Given an input variable $v_A$, our goal is to generate chemically plausible alternative variables that correspond to valid MMPTs.
Concretely, given an input variable $v_A$, we aim to generate a set of candidate substitutions
$\{v_B^{(1)}, v_B^{(2)}, \dots\}$ such that each pair $(v_A \rightarrow v_B^{(i)})$ constitutes a valid MMPT. 

This formulation differs fundamentally from molecule-level analog generation, where edits are implicit and entangled across the structure. By operating directly in the MMPT space, the task becomes learning conditional distributions over chemical transformations, rather than over entire molecules.

Although MMPTs provide a natural abstraction for localized analog design, learning to generate MMPTs at scale poses several challenges. First, the space of MMPT is large, sparse, and highly imbalanced: a small number of common substitutions dominate, while many chemically meaningful transformations occur rarely, making it difficult for task-specific or small models to learn transferable priors. 
Second, models grounded in formal chemical languages preserve precise structural semantics but are inherently rigid and difficult to instruct through user prompts; on the other hand, natural language models support flexible prompting, but often lack explicit structural constraint enforcement and are not specialized for chemical data. It is fundamentally challenging to retain chemically grounded priors while achieving user controllability.
Third, practical deployment in medicinal chemistry requires explicit distributional steering, enabling users to bias generation toward preferred or project-specific MMPT patterns without retraining.

\begin{figure*}[t]
    \centering
    \includegraphics[width=.8\linewidth]{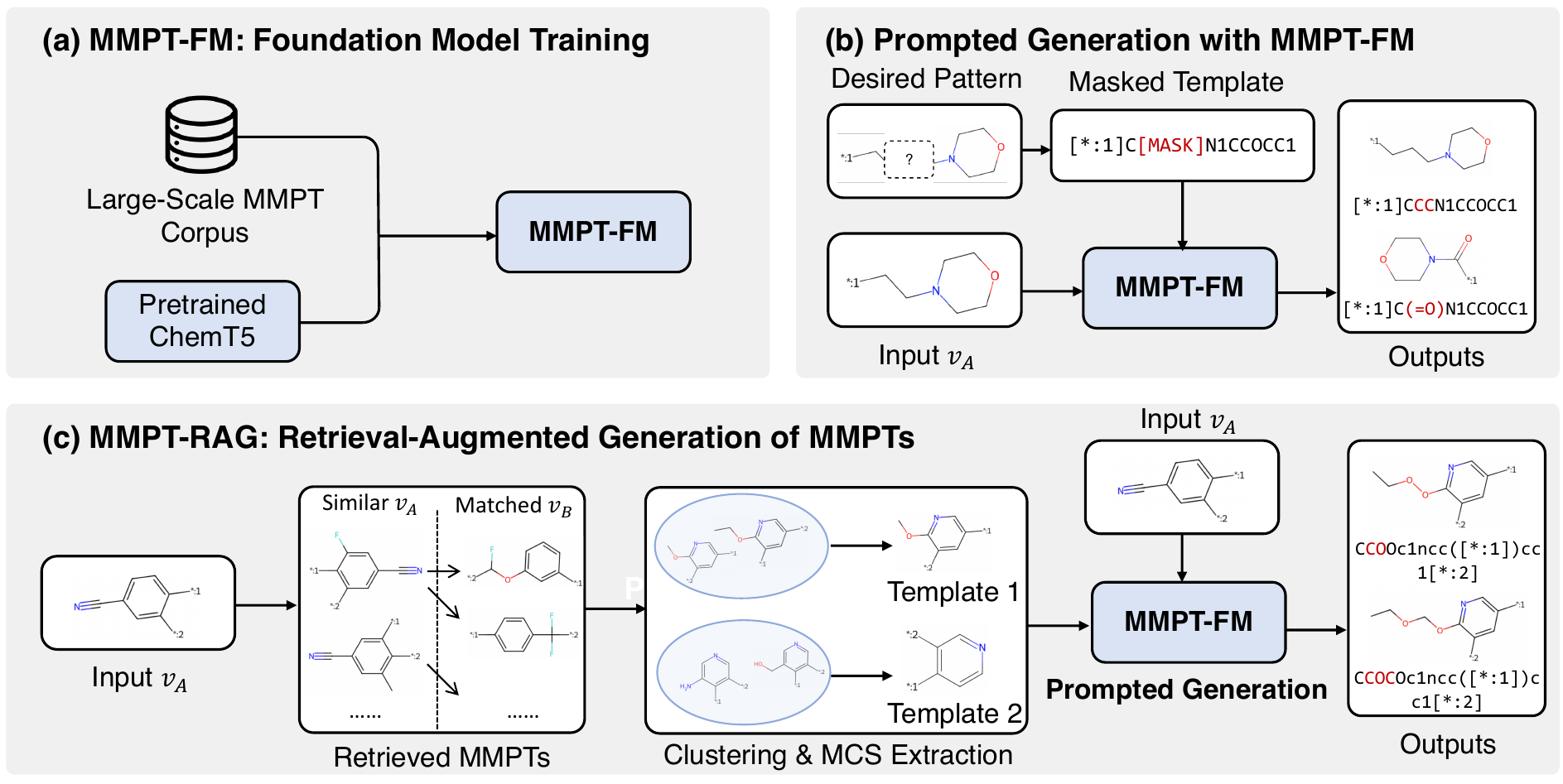}
    \vspace{-3mm}
    \caption{Overview of the proposed MMPT framework. (a) The foundation model (MMPT-FM) is trained on large-scale MMPT data.
(b) MMPT-FM supports controllable generation via masked template prompting.
(c) MMPT-RAG augments generation with retrieval, clustering, and MCS-based template extraction to guide context-aware transformation generation.}
\vspace{-3mm}
    \label{fig:main}
\end{figure*}

\section{Methodology}

To address the challenges outlined above, we propose a two-component framework centered on MMPTs. First, to address the challenges of the large, long-tailed MMPT search space and the challenge in enabling flexible user control, we introduce MMPT-FM, a foundation model trained directly on large-scale MMPT data. By modeling variable-to-variable transformations in a chemically grounded token space and supporting structured prompting, MMPT-FM learns generalizable transformation-level priors while enabling user-guided generation. To address the challenge of explicit, project-level controllability without retraining, we introduce MMPT-RAG, a retrieval-augmented generation framework that incorporates external reference analogs as guidance, allowing users to bias generation toward preferred transformation patterns and project-specific needs.

\subsection{MMPT-FM: A Promptable MMPT Foundation Model}\label{sec:fm}



Here, we construct our foundation model, MMPT-FM, that operates directly in the MMPT space, learning variable-to-variable transformations as the primary generative unit. We further equip it with a prompting mechanism that conditions generation on optional user-specified structural templates, thereby enabling more flexible user control.

\subsubsection{Training of MMPT-FM.}

As described earlier, we train our foundation model on the variables rather than on entire molecules, modeling each MMPT as a localized transformation $v_A\to v_B$, where both $v_A$ and $v_B$ are represented as SMARTS~\cite{daylight2019smarts}, a textual chemical representation method for substructures. 
Formally, let $\Sigma$ denote a finite vocabulary of SMARTS tokens, including atomic symbols (e.g., C, N, O), bond descriptors (e.g., =, \#), and other syntactic elements for SMARTS expressions. 
Each variable is represented as a sequence over $\Sigma$ as $v_A = (\tau_1^{A}, \dots, \tau_{n}^{A}),
v_B = (\tau_1^{B}, \dots, \tau_{m}^{B})$, 
where $\tau_i^{A}, \tau_j^{B} \in \Sigma$.
An MMPT is therefore formulated as a conditional sequence-to-sequence mapping from $v_A$ to $v_B$, and the model learns the conditional distribution
$p(v_B \mid v_A)$
defined over syntactically valid and chemically plausible SMARTS variables.

To construct the training data, we use MMPDB \cite{dalke2018mmpdb} to extract MMPs from ChEMBL, a large database of drug-like compounds \cite{gaulton2012chembl}. To enrich for drug-like chemistry, we first filter molecules using the medchem package’s \texttt{rule\_of\_druglike\_soft} criterion \cite{Noutahi_medchem_2025}, further retain only compounds with molecular weight $\geq 200$ Da, and remove structural alerts using the curated list compiled by Walters \cite{PatWalters_alert_list_2018}. After filtering, 800,714 compounds remain. We then generate MMPs using MMPDB with the maximum variable ratio constrained to $33\%$ (\texttt{max-variable-ratio}=0.33), yielding 2.63 million MMPs.
For each MMP, MMPDB identifies the shared substructure (the \emph{constant} part) and the differing variable pair (the \emph{variable} parts). 
Totally, there are approximately 800K distinct MMPTs. We use $90\%$ of these transformations for training and reserve the remaining $10\%$ as a held-out evaluation set.

To model this variable-to-variable translation task, we adopt an encoder–decoder Transformer architecture. The encoder takes as input the variable that the user wishes to replace, and the decoder generates suggested replacement variables. To provide the model with a chemically aware initialization, we initialize from T5Chem~\citep{christofidellis2023unifying, lu2022unified}, a T5-style model pretrained on large-scale chemical tasks, which offers a moderate model size and strong coverage of chemical syntax and semantics.

\subsubsection{Prompted Generation with MMPT-FM.}\label{sec:prompt}

In practical medicinal chemistry workflows, analog design is rarely unconstrained; chemists often seek to impose explicit structural conditions, such as preserving a core motif or exploring a specific transformation family, making it essential for an MMPT generation framework to support user-specified structural patterns.
Unlike natural language models, where intent can be expressed through appended text instructions, MMPT-FM operates purely in chemical token space, so desired structural patterns cannot be specified as free-form text. To preserve the variable-to-variable formulation while enabling control, we encode user intent as partial structural constraints on the output.

The prompted generation process is illustrated in Fig.~\ref{fig:main}~(b). Specifically, we formulate prompted generation of MMPT-FM as a constrained sequence completion task, where the user can impose a structural constraint $S$ that defines the preferred chemical substructure to be preserved in the generated $v_B$, with some undefined positions that the model is asked to complete.
Formally, we define a masked template $T$ as
$T = (\tau_1, \tau_2, \dots, \tau_L), \text{where each } \tau_i \in \Sigma \cup \{\texttt{[MASK]}\}$. The mapping from a user-specified structural constraint $S$ to a textual template $T$ is operated by preserving the chemical tokens corresponding to the fixed substructure $S$ while replacing the undefined positions with $\texttt{[MASK]}$ tokens.

The generation objective is to find complete variable sequences $v_B$ that are {compatible} with $T$. 
At inference time, the generator performs approximate inference to produce a set of candidate variables
$\mathcal{V}_B = \{v_b^{(1)}, \dots, v_b^{(K)}\}$
that complete the masked positions conditioned on ${T}$.
These candidates are selected to have high likelihood under the model as
\begin{equation}
\text{PromptGen}(v_A, \tilde{T}, K) = \underset{\mathcal{S}, |\mathcal{S}|=K}{\text{argmax}}
\sum_{v_B^{(i)} \in \mathcal{S}}
\log
p_{\theta}
\!\left(
v_B^{(i)} \mid v_A, {T}
\right),
\label{eq:infilling-template}
\end{equation}
where $\theta$ denotes the parameters of the trained MMPT-FM, and $K$ is the desired number of output variables.

We implement masked infilling via an explicit search over the space of possible span completions, using the model likelihood to score each candidate. To make this search tractable, in practice, we limit the branching factor at each masked position by selecting only the effective number $N_{\text{eff}}$~\cite{jost2006entropy} of high-probability tokens, calculated by $N_{\text{eff}} = 2^{H_2(p)}$, where $p$ is the token probability distribution of generating each token, and $H_2(p) = -\log_2 \sum_i p_i^2$ is the Rényi entropy~\cite{renyi1961measures} of order 2 of the token distribution $p$. Starting from the initial masked template, we conduct a tree search by iteratively filling masked positions with these candidate tokens, and accumulating sequence-level log-likelihoods from the model. Finally, we rank completed candidates by their likelihood and return the top-$k$ infilled fragments as the model’s suggested variable substitutions.

\subsection{MMPT-RAG: A RAG Framework for MMPT Generation}\label{sec:rag}

MMPT-FM learns a global prior over chemically meaningful MMPTs from large-scale data, whereas practical design often relies on project-specific reference analogs. A principled generator should therefore interpolate between general chemical knowledge and project-specific patterns, rather than overriding one with the other. We realize this idea through MMPT-RAG, which integrates retrieval as guidance (Section~\ref{sec:rag-workflow}) and admits a theoretical interpretation as an explicit distribution shift from model prior to reference dataset distribution (Section~\ref{sec:rag-theory}). 

\subsubsection{Workflow. }\label{sec:rag-workflow}
MMPT-RAG guides the generation towards a reference database $\mathcal{D}$. Let $\mathcal{D} = \{(v_A^{(i)}\to v_B^{(i)})\}_{i=1}^{N}$ denote a reference database of MMPTs. Given an input variable $v_A$, the framework retrieves relevant MMPTs, clusters them to identify representative patterns, and converts each cluster into a structural template. These templates are then used to prompt MMPT-FM via masked infilling, as illustrated in Fig.~\ref{fig:main}~(c). The workflow proceeds in three steps detailed below.

\textit{Step 1: Retrieval with input similarity. }
To leverage the useful examples from the retrieval dataset, we retrieve variables structurally similar to the query. 
Let $\psi(\cdot)$ denote an embedding function, which is usually implemented with the Morgan fingerprint \cite{morgan1965generation}. 
For query $v_A$, we retrieve top-$K$ nearest neighbors $\mathcal{V}_A$ from the database $\mathcal{D}$ as
\begin{equation}\begin{aligned}
    \mathcal{V}_A
=&\operatorname{Retrieve}\!\left(v_A; \mathcal{D}\right)\\=&
\left\{
v_A^{(i)}
\;\middle|\;
(v_A^{(i)}\to v_B^{(i)})\in\mathcal{D}, i \in \operatorname{TopK}
\left(
\text{sim}(\psi(v_A), \psi(v_A^{(i)}))
\right)
\right\},
\end{aligned}
\end{equation}
where sim($\cdot$) denotes a function to calculate similarity, usually implemented using cosine similarity,
yielding candidate contexts $v_A'\in \mathcal{V}_A$. Given a retrieved set of input variables $\mathcal{V}_A=\{v_A^{(i)}\}$, we denote by $\mathcal{V}_B=\{v_B^{(j)}\}$ the set of all variables such that there exists some $i, j$ that $(v_A^{(i)} \rightarrow v_B^{(j)})$ forms an MMPT in $\mathcal{D}$ as
\begin{equation}
  \mathcal{V}_B = \left\{ v_B \mid \exists v_A \in \mathcal{V}_A, (v_A, v_B) \in \mathcal{D} \right\}  
\end{equation}

\textit{Step 2: Clustering of Retrieved Examples.}
To extract representative structural patterns in retrieved $\mathcal{V_B}$, we perform clustering over the retrieved outputs.
We compute embeddings $\phi(v_B)$ for all $v_B \in \mathcal{V_B}$ and partition them into $K$ clusters
\begin{equation}
\mathcal{C}_1, \ldots, \mathcal{C}_K
\;=\;
\operatorname{Cluster}\!\left(\{\,\phi(v_B): v_B\in\mathcal{V}_B\,\}\right),
\end{equation}
using a clustering algorithm, which is implemented as HDBSCAN \cite{mcinnes2017hdbscan} in this work.

\textit{Step 3: Cluster pattern-prompted generation.}
For each cluster $\mathcal{C}_k$, we first identify its invariant chemical scaffold.  
The invariant substructure is defined as the Maximum Common Substructure (MCS) \cite{raymond2002rascal} among these retrieved molecules as
$S_k = \operatorname{MCS}(\mathcal{C}_k)$. 
In practice, this can be automatically computed using the \texttt{rdkit.Chem.MCS.FindMCS} function \cite{landrum2013rdkit}, which finds the largest subgraph common to all variables in $\mathcal{C}_k$. Then we construct the masked template $T_k$ for each MCS $S_k$ via the approach introduced in Section~\ref{sec:prompt}.
The resulting $T_k$ serves as the cluster-invariant template, which is further used as the prompt to guide the generation.

Given cluster-specific templates $\{{T}_k\}_{k=1}^K$ and an (optional) user-specified preference distribution
$\tilde{\boldsymbol{\pi}}(x)$, MMPT-RAG generates outputs from each template proportionally to its assigned weight.
Formally, the RAG output is defined as
\begin{equation}
\text{RAG}(v_A)
=
\bigcup_{k=1}^K
\operatorname{PromptGen}\!\left(v_A, {T}_k,\, N_k\right),
\label{eq:rag-gen}
\end{equation}
where $N_k$ denotes the generation budget allocated to cluster $k$ and $\qquad
N_k \propto \tilde{\pi}_k(x),$ $\tilde{\boldsymbol{\pi}}(x)$ is the user-specified preferred cluster distribution for each cluster $C_k$:
\begin{equation}
\tilde{\boldsymbol{\pi}}(x)
=
\big(\tilde{\pi}_1(x), \ldots, \tilde{\pi}_K(x)\big),
\qquad
\tilde{\pi}_k(x) \ge 0,\quad
\sum_{k=1}^K \tilde{\pi}_k(x) = 1,    
\end{equation}
where $\tilde{\pi}_k(x)$ can encode arbitrary user preferences.

\subsubsection{Theoretical Analysis. }\label{sec:rag-theory}

To analyze the RAG mechanism, we formalize the interaction between the foundation model and the reference set as a Bayesian integration. We show that MMPT-RAG performs a global distribution shifts toward the reference dataset while maintaining the knowledge of the foundation model

\begin{theorem}[Global Steering]\label{thm:steering}
Let $p_{\theta}(y \mid x)$ be the conditional distribution over variables $y \in \mathcal{V}$ defined by the unconstrained foundation model. 
Assume that for each cluster $k$, prompting the model with template $T_k$ (via masked infilling) results in a local distribution $p(y \mid x, T_k)$ that is an adaptive interpolation between the model's prior and the cluster-specific reference $p(y \mid T_k)$:
\begin{equation}
p(y \mid x, T_k) = (1 - \alpha_k) p_{\theta}(y \mid x) + \alpha_k p(y \mid T_k),
\label{eq:local_shift}
\end{equation}
where $\alpha_k \in (0,1]$ is an \emph{adaptive gating factor} reflecting the model's adherence to template $T_k$ under context $x$. 

Then, the global RAG distribution defined in \eqref{eq:rag-gen}, $p_{\text{RAG}}(y \mid x)$, satisfies:
\begin{equation}
p_{\text{RAG}}(y \mid x) = (1 - \bar{\alpha}) p_{\theta}(y \mid x) + \bar{\alpha} p_{\text{ref}}^*(y \mid x),
\end{equation}
where $\bar{\alpha} = \sum_{k} \tilde{\pi}_k \alpha_k$ and $p_{\text{ref}}^*(y \mid x) = \sum_{k} \frac{\tilde{\pi}_k \alpha_k}{\bar{\alpha}} p(y \mid T_k)$. 
\end{theorem}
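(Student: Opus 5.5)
The argument reduces to a mixture computation. First I will make precise what the global RAG distribution is. By \eqref{eq:rag-gen}, the output is the union over clusters of samples from $\operatorname{PromptGen}(v_A,T_k,N_k)$ with $N_k\propto\tilde{\pi}_k(x)$. I will read this union as a single draw: pick a cluster index $k$ with probability $\tilde{\pi}_k$ (the fraction of the budget allocated to cluster $k$), then draw $y\sim p(y\mid x,T_k)$. This gives
\begin{equation*}
p_{\text{RAG}}(y\mid x)=\sum_{k=1}^K \tilde{\pi}_k\, p(y\mid x,T_k).
\end{equation*}
This identification is the one modeling step that needs a justification, and I expect it to be the main point to defend. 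I would justify it as the limit of the empirical distribution of the pooled candidate set as the total budget $\sum_k N_k$ grows, treating the outputs of each PromptGen call as draws from the local distribution. This idealizes top-$K$ selection as sampling, and I would also ignore duplicates across clusters. I would state both simplifications explicitly as part of the interpretation.

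Next I substitute the local-shift assumption \eqref{eq:local_shift} into the mixture and split the sum by linearity:
\begin{equation*}
p_{\text{RAG}}(y\mid x)=\Big(\sum_k\tilde{\pi}_k(1-\alpha_k)\Big)p_\theta(y\mid x)+\sum_k\tilde{\pi}_k\alpha_k\,p(y\mid T_k).
\end{equation*}
Since $\sum_k\tilde{\pi}_k=1$, the first coefficient equals $1-\bar{\alpha}$, where $\bar{\alpha}=\sum_k\tilde{\pi}_k\alpha_k$.

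Because every $\alpha_k>0$ and the $\tilde{\pi}_k$ are nonnegative and sum to one, we have $\bar{\alpha}>0$. We may therefore factor $\bar{\alpha}$ out of the second term, which produces $\bar{\alpha}\,p^*_{\text{ref}}(y\mid x)$ with $p^*_{\text{ref}}$ exactly as defined in the statement. It remains to check that $p^*_{\text{ref}}$ is a genuine distribution. Its weights $\tilde{\pi}_k\alpha_k/\bar{\alpha}$ are nonnegative and sum to one, so it is a convex combination of the distributions $p(\cdot\mid T_k)$. Also $\bar{\alpha}\in(0,1]$, so the final expression is a proper convex interpolation between the prior $p_\theta(\cdot\mid x)$ and a reference mixture. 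This is the ``global steering'' interpretation.
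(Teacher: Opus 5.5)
Your proposal is correct and follows the paper's proof. Both read $p_{\text{RAG}}$ as the $\tilde{\pi}_k$-weighted mixture of the cluster-conditioned distributions (which the paper simply asserts from the workflow), substitute the local-shift assumption, use $\sum_k\tilde{\pi}_k=1$, and factor out $\bar{\alpha}$. Your added checks go beyond the paper: that $\bar{\alpha}>0$ justifies the division, that $p^*_{\text{ref}}$ is a genuine distribution, and that the mixture reading idealizes the top-$K$ union as sampling.
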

\textbf{{Proof of Theorem~\ref{thm:steering}.}} See Appendix~\ref{append:proof}.

Theorem~\ref{thm:steering} shows that the RAG distribution is a convex interpolation between the original foundation model distribution and a reference set distribution. $\bar{\alpha}$ serves as a weight that quantifies the distribution shift.

\section{Experiments}
In this section, we evaluate our framework systematically through three progressively difficult tasks, which range from generic in-distribution MMPT generation to the prediction of future analogs in subsequent patents (Section~\ref{sec:main_result}). Following these main results, we provide a decoupled analysis that examines how the model covers chemical space (Section~\ref{sec:space_coverage}) and follows user prompts (Section~\ref{sec:eval_prompt}) while also demonstrating the ability of retrieval to align generations with specific project domains (Section~\ref{sec:eval_rag}). The section concludes with a study of hyperparameter sensitivity (Section~\ref{sec:hyperparam}) and a qualitative review of specific chemical transformations to illustrate the practical utility of the framework (Section~\ref{sec:qualitative}).

\subsection{Experimental Setup}\label{sec:setup}
\subsubsection{Main Experiment Tasks. }We evaluate our framework from the perspective of three progressively more realistic analog-generation tasks, each instantiated with a corresponding dataset.

\textbf{Task 1: In-distribution MMPT Generation.}
The first task evaluates whether the model can recover and generate valid and novel local transformations under an in-distribution setting.
We instantiate this task using the 10\% held-out test split from the ChEMBL MMPT dataset, constructed with the same MMPT extraction pipeline as training but with disjoint MMPTs.

\textbf{Task 2: Within-Patent Analog Expansion.}
The second task evaluates MMPT generation within a real-world medicinal chemistry project. We construct this setting using the PMV Pharmaceutical patent dataset (PMV17)~\cite{vu2017methods} with MMPDB~\cite{dalke2018mmpdb} to extract MMPTs.
This task evaluates whether the model can discover promising variables in a realistic setting.

\textbf{Task 3: Cross-Patent Follow-up Generation.}
The third task evaluates whether a model can propose forward-looking MMPTs that may appear in later patents, a realistic and challenging evaluation of temporal medicinal chemistry progression.
We construct a patent-to-patent setting by extracting MMPTs linking compounds from PMV17 (from 2017) to those appearing in subsequent patents (PMV21)~\cite{vu2021methods} (from 2021), both derived from patent filings by PMV Pharmaceuticals, Inc.

\subsubsection{Compared Methods. }
To the best of our knowledge, no existing method is explicitly designed to operate in the variable-to-variable MMPT formulation. The only directly comparable baseline in the MMPT space is similarity-based \textbf{database retrieval}, which is a non-learning method that returns nearest-neighbor variables from the reference dataset. To further situate our results within established industrial practice, we additionally include \textbf{REINVENT4} (LibINVENT module) \cite{loeffler2024reinvent}, a state-of-the-art molecule-level analog generation framework. Although we acknowledge that LibINVENT operates on a different objective by conditioning on a fixed constant scaffold rather than the variable part, we still compare with it to demonstrate that our method can perform better even without the auxiliary information. We report both 
\textbf{MMPT-FM}, which only generates with our foundation model, and
\textbf{MMPT-RAG}, which denotes the full proposed RAG framework. 

\subsubsection{Evaluation Metrics.}
We report a consistent set of metrics across tasks to assess validity, novelty, and recoverability of medicinal-chemistry transformations.
\textbf{Valid} measures the percentage of generated strings that form chemically valid variables and have the same number of attachment points as the input.
\textbf{Novel} reports the percentage of generated variables not seen during training. Specifically, \textbf{Novel/valid} is calculated by the number of novel and valid variables divided by the number of valid variables; \textbf{Novel/all} is calculated by the number of novel and valid variables divided by the number of all outputs.
\textbf{Recall} measures the percentage of ground-truth variables recovered by the model.
For patent-based tasks, we further report \textbf{Recall-i} and \textbf{Recall-o}, which measure recovery of in-training-set and out-of-training-set transformations, respectively. Among them, Novel and Recall-o are the two most important metrics to evaluate the models' performance since both novelty and ability to learn from prior knowledge are critical in generating analogs by mimicking medicinal chemists' intuition.

\subsubsection{Implementation Details. } 
All implementation details are given in Appendix~\ref{append:implementation}.

\subsection{Main Results}\label{sec:main_result}
\subsubsection{Task 1: In-distribution Evaluation on ChEMBL}\label{sec:task1}

Table~\ref{table:chembl} reports the results on Task 1. As expected, database retrieval achieves moderate recall (43.5\%) but yields no novel outputs, reflecting the inherent limitation of exact analog lookup. REINVENT4 attains higher novelty (23.0\%) but suffers from very low recall (12.7\%), indicating that unconstrained molecule-level generation struggles to reproduce specific, localized MMP edits even in an in-distribution setting.
In contrast, MMPT-FM substantially improves recall to 67.6\% while maintaining high validity. Building on this foundation, MMPT-RAG further boosts recall to 82.1\% and achieves the highest novelty (30.1\%) among all methods. This improvement confirms that MMPT-RAG is a strong complement for the foundation model itself by leveraging less-represented but still related MMPT patterns.
Overall, these results show that MMPT-centric modeling is effective in in-distribution MMPT generation, and that retrieval augmentation further strengthens coverage of valid transformation patterns beyond what can be achieved by learning alone.

\begin{table}[th]
\centering
\caption{MMPT generation performance on three tasks.}
\vspace{-2mm}
\begin{subtable}{\linewidth}
\centering
\small
\caption{Task 1: ChEMBL MMPT Dataset}
\vspace{-2mm}
\label{table:chembl}
\begin{tabular}{lcccc}
\toprule
\textbf{Method} & \textbf{Recall} & \textbf{Novel/valid} & \textbf{Novel/all} & \textbf{Valid} \\
\midrule
Database Retrieval & 43.5\% & 0.0\% & 0.0\% & 100\% \\
REINVENT 4         & 12.7\% & 23.0\% & 5.6\% & 24.4\% \\
\rowcolor{highlightblue} MMPT-FM (Ours)  & 67.6\% & 26.0\% & 25.8\% & 99.3\% \\
\rowcolor{highlightblue} MMPT-RAG (Ours) & 82.1\% & 30.1\% & 29.8\% & 99.1\% \\
\bottomrule
\end{tabular}
\end{subtable}
\vspace{2mm}
\begin{subtable}{\linewidth}
\centering
\small
\caption{Task 2: PMV17 MMPT Dataset}
\vspace{-2mm}
\hspace{-5mm}
\label{table:pmv17}
\resizebox{1.07\linewidth}{!}{%
\begin{tabular}{lcccccc}
\toprule
\multirow{2}{*}{\textbf{Method}} & \multicolumn{3}{c}{\textbf{Recall}} & \multicolumn{2}{c}{\textbf{Novel}} & \multirow{2}{*}{\textbf{Valid}} \\
\cmidrule(lr){2-4} \cmidrule(lr){5-6}
 & \textbf{Total} & \textbf{Recall/i} & \textbf{Recall/o} & \textbf{/valid} & \textbf{/all} & \\
\midrule
Database Retrieval & 22.7\% & 29.4\% & 0.0\% & 0.0\% & 0.0\% & 100\% \\
REINVENT 4         & 5.1\%  & 7.1\%  & 0.0\% & 48.2\% & 15.7\% & 32.5\% \\
\rowcolor{highlightblue} MMPT-FM (Ours)  & 41.4\% & 52.1\% & 13.2\% & 23.0\% & 22.7\% & 98.9\% \\
\rowcolor{highlightblue} MMPT-RAG (Ours) & 49.2\% & 62.1\% & 15.2\% & 23.7\% & 23.4\% & 98.6\% \\
\bottomrule
\end{tabular}}
\end{subtable}

\begin{subtable}{\linewidth}
\centering
\small
\caption{Task 3: PMV17-PMV21 Cross-Patent Generation}
\vspace{-2mm}
\label{tab:pmv21_only}
\begin{tabular}{lccc}
\toprule
\textbf{Method} & \textbf{Recall} & \textbf{Recall/i} & \textbf{Recall/o} \\
\midrule
Database Retrieval  & 28.57\% & 57.49\% & 0.00\% \\
REINVENT 4          & 7.36\%  & 12.21\% & 1.87\% \\
\rowcolor{highlightblue} MMPT-FM (Ours)   & 43.77\% & 76.45\% & 11.48\% \\
\rowcolor{highlightblue} MMPT-RAG (Ours)  & 46.81\% & 81.35\% & 12.99\% \\
\bottomrule
\end{tabular}
\end{subtable}
\end{table}

\subsubsection{Task 2: Within-Patent Analog Expansion on PMV17.}\label{sec:task2}

Table~\ref{table:pmv17} reports results on Task 2. As in Task~1, database retrieval achieves limited recall (22.7\%) and completely fails to recover structurally novel transformations, highlighting that exact lookup is insufficient for realistic series expansion. REINVENT4 exhibits very low recall across all metrics.
In contrast, MMPT-FM substantially improves overall recall to 41.36\% while achieving strong in-training-set recovery (Recall-i = 52.06\%) and non-trivial out-of-training-set recall (Recall-o = 13.15\%). 
MMPT-RAG further improves performance across all metrics, achieving the highest overall recall (49.21\%), the strongest in-training-set recovery (62.08\%), and the best out-of-training-set recall (15.24\%). 
The gains in Recall indicate that MMPT-RAG effectively helps guide the generator toward a region which is closer to the PMV17 dataset.

\subsubsection{Task 3: Cross-Patent Generation (PMV17 $\rightarrow$ PMV21)}\label{sec:task3}


Table~\ref{tab:pmv21_only} reports the results for Task 3.
Following the same pattern, database retrieval fails entirely on out-of-training transformations, and
REINVENT4 exhibits extremely low recall across all metrics. MMPT-FM substantially improves recall by modeling transformation-level priors, achieving a recall of 43.77\% and recovering a large fraction of in-training transformations.
By incorporating retrieval-augmented prompting, MMPT-RAG further improves performance across all metrics, achieving the highest overall recall (46.81\%), in-training recall (81.35\%), and out-of-training recall (12.99\%).

\subsection{Decoupled Analysis}
\subsubsection{Evaluation of MMPT-FM's Chemical Space Coverage}\label{sec:space_coverage}
\begin{figure}[t]
    \centering
    \begin{subfigure}[b]{0.23\textwidth}
        \centering
        \includegraphics[width=\textwidth, trim={20mm 8mm 19mm 8mm}, clip]{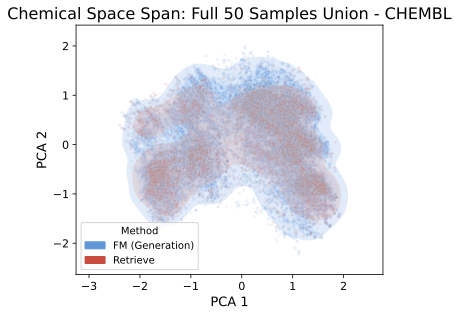} 
        \caption{ChEMBL dataset.}
    \end{subfigure}
    \begin{subfigure}[b]{0.23\textwidth}
        \centering
        \includegraphics[width=\textwidth, trim={20mm 8mm 17mm 8mm}, clip]{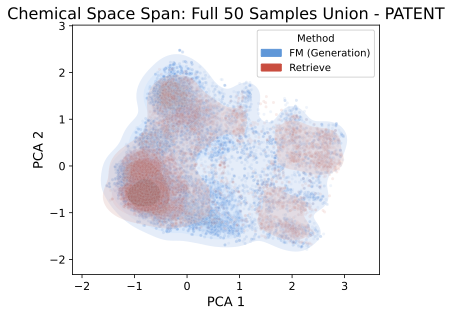} 
        \caption{PMV17 dataset.}
    \end{subfigure}
    
    \caption{Visualizations of the chemical space explored by our foundation model MMPT-FM (blue) versus Database Retrieval (red) on (a) ChEMBL and (b) PMV17 datasets. 
    }
    \label{fig:space_span}
\end{figure}

\begin{table}[t]
\centering
\caption{Effect of beam size on average validity of MMPT-FM, averaged on the ChEMBL MMPT held-out test set.}
\label{tab:beam_validity}
\setlength{\tabcolsep}{4pt}
\renewcommand{\arraystretch}{1.05}
\small
\begin{tabular}{lccccc}
\toprule
\textbf{Beam} & 400 & 600 & 800 & 1000 & 1200 \\
\midrule
\textbf{Avg Validity} & 0.9992 & 0.9991 & 0.9989 & 0.9988 & 0.9986 \\
\bottomrule
\end{tabular}
\vspace{-2mm}
\end{table}

To assess the fundamental generative capability of MMPT-FM, we evaluate its ability to explore the chemical space compared to Database Retrieval. We utilize a randomly sampled gruop of 50 unique variables from both ChEMBL and PMV17 to generate candidate sets. The structural distributions are visualized by projecting their Morgan fingerprints using PCA, comparing the spans of FM-generated candidates against retrieved variables.
Visual analysis of the PCA projections (Fig.~\ref{fig:space_span}) reveals that MMPT-FM (blue) consistently explores a substantially larger chemical volume compared to Database Retrieval (red), demonstrating superior extrapolation beyond the training distribution.  

We further study how beam size affects the validity of MMPT generation on the ChEMBL MMPT held-out test set. As shown in Table~\ref{tab:beam_validity}, validity remains consistently high across all beam sizes, with only a slight decrease from 0.9992 at beam 400 to 0.9986 at beam 1200, which is usually enough for application scenarios. Notably, a beam size of 1200 significantly exceeds the depth typically required for practical medicinal chemistry applications. This indicates that our generator is not validity-bottlenecked by search depth within its intended operational range; increasing beam primarily expands the candidate pool but does not harm chemical correctness.

\subsubsection{Evaluation of the Prompted Generation Mechanism of MMPT-FM}\label{sec:eval_prompt}
We evaluate the prompted generation capability of MMPT-FM via a masked infilling task. This task motivates the assessment of whether the model can strictly adhere to user-specified structural templates while proposing chemically plausible completions. We construct a benchmark by randomly sampling 50 unique variables from ChEMBL and PMV17 with lengths exceeding 15 characters. For each variable, three masked versions are generated by masking a consecutive sequence of tokens with a length capped at $\min(\text{half of the output}, 8)$. Here prompted generation is performed using 1,000 beams and a length margin of 7.
Results are given in Table~\ref{tab:prompt_eval}. At $K=1$, the model achieves near-perfect validity and high GT recovery (58.0\% for ChEMBL, 46.0\% for PMV17). By $K=20$, the model attains near-perfect recall across both datasets. Furthermore, at $K=200$, the model produces a significant number of unique valid candidates (41.6 for ChEMBL, 31.6 for PMV17), confirming its ability to explore diverse chemical spaces even under rigid structural constraints, which shows the effectiveness of our prompted generation mechanism in generating promising, valid and user-desired variables.

\begin{table}[t]
\centering
\small
\vspace{-3mm}
\caption{Prompted generation on ChEMBL and PMV17 at different numbers of generations ($K$). We report Validity, Recall of ground truth, and numbers of generated unique and valid variables (\#Unique).}
\label{tab:prompt_eval}
\vspace{-3mm}
\begin{tabular}{c c c c c c c}
\toprule
& \multicolumn{3}{c}{ChEMBL} & \multicolumn{3}{c}{PMV17} \\
\cmidrule(lr){2-4}\cmidrule(lr){5-7}
$K$ & Valid (\%) & Recall (\%) & \#Uniq & Valid (\%) & Recall (\%) & \#Uniq \\
\midrule
1   & 100.0 & 58.0  & 1.00  & 98.0  & 46.0  & 0.98 \\
10  & 86.0  & 96.0  & 8.60  & 79.2  & 90.0  & 7.92 \\
20  & 74.1  & 100.0 & 14.82 & 62.9  & 96.0  & 12.58 \\
50  & 48.0  & 100.0 & 24.00 & 39.24 & 98.0  & 19.62 \\
100 & 31.7  & 100.0 & 31.70 & 24.84 & 98.0  & 24.84 \\
200 & 20.8  & 100.0 & 41.60 & 15.79 & 98.0  & 31.58 \\
\bottomrule
\end{tabular}
\end{table}

\subsubsection{Analysis of Ditribution Steering via RAG. }\label{sec:eval_rag}
\begin{figure}[t]
    \centering
    \vspace{-3mm}
    \includegraphics[width=0.8\linewidth, trim={0 0 0 16mm}, clip]{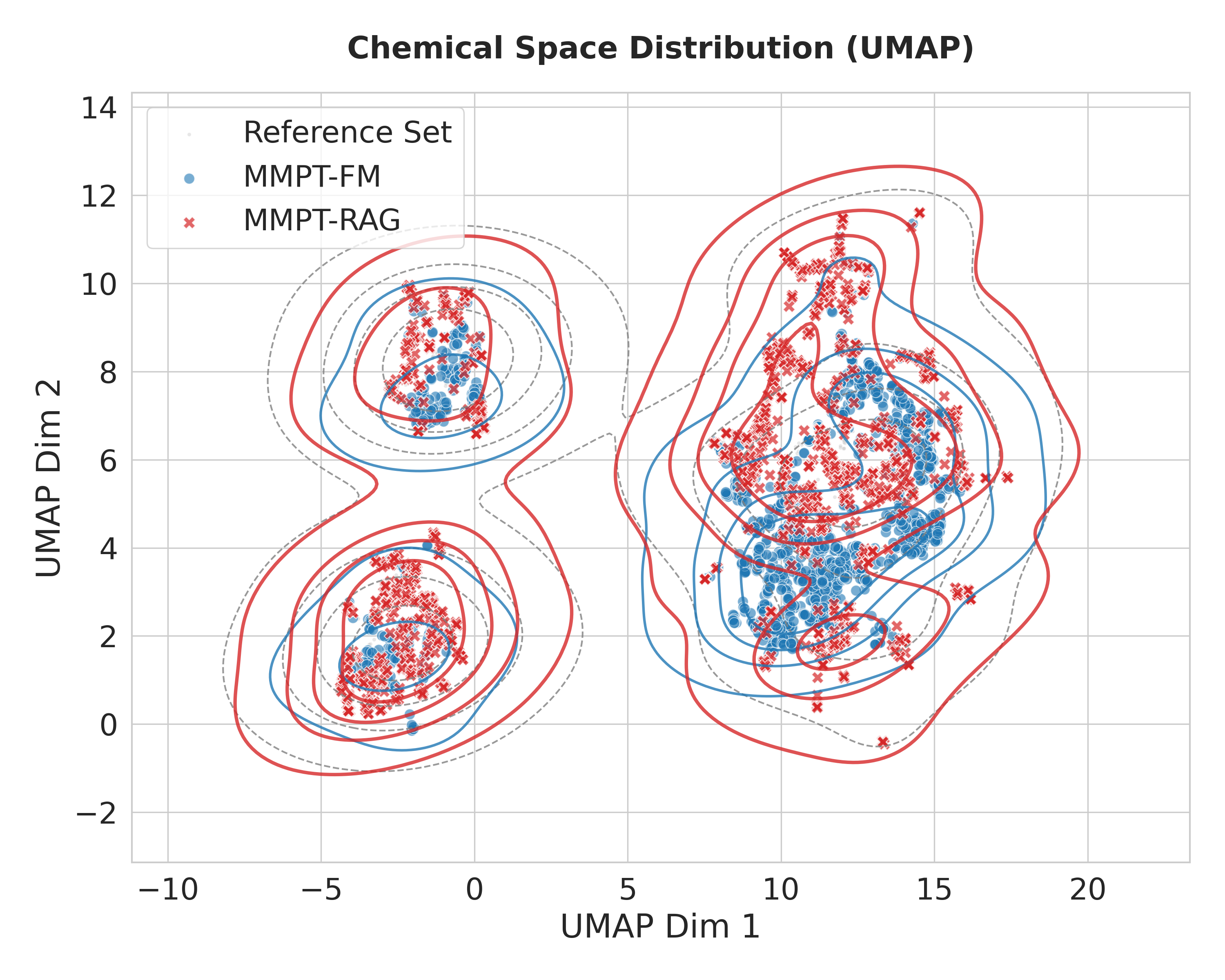}
    \vspace{-3mm}
    \caption{UMAP visualization of MMPT-FM and MMPT-RAG's chemical landscape on PMV17. The grey shaded areas represent the reference dataset's distribution. Compared to FM inference (blue), the MMP-RAG framework (red) populates structural voids where the foundation model is sparse or absent.}
    \vspace{-3mm}
    \label{fig:rag_dist}
\end{figure}
To investigate how retrieval augmentation steers the generative process toward target chemical domains, we visualize the global distribution of generated analogs against the reference patent dataset (PMV17). We compare the union of outputs from 50 unique inputs generated via vanilla FM inference versus the MMPT-RAG framework, using the PMV17 dataset as the reference manifold represented by the grey shaded regions in Fig.~\ref{fig:rag_dist}. For clarity, the RAG visualization highlights the additional coverage contributed beyond vanilla FM outputs, illustrating the complementary effect of retrieval augmentation.

As illustrated in Fig.~\ref{fig:rag_dist}, MMPT-RAG (red) expands into multiple structural regions that remain underexplored by the vanilla foundation model (blue). The vanilla FM tends to concentrate in high-probability general regions, leaving several reference clusters sparsely covered. In contrast, retrieval augmentation encourages the model to populate these underrepresented areas, effectively filling distributional gaps. This shift indicates that RAG enhances coverage of project-relevant chemical space by guiding generation toward regions of the reference dataset.

\subsubsection{Hyperparameter Sensitivity Analysis. }\label{sec:hyperparam}

To better understand the behavior of MMPT-RAG, we perform a sensitivity analysis on three hyperparameters that control generation quality: the number of retrieved clusters expanded, i.e., generate using its MCS (default 10), the number of variables generated per cluster (default 50), and the mask-infilling length range used during sequence completion (default = original length before masking $\pm 7$). We vary one hyperparameter at a time while keeping the others fixed: clusters ${3,5,10,20}$, variables per cluster ${10,25,50,100}$, and mask range ${[1,3],[1,5],[1,7],[1,9]}$.
Figure~\ref{fig:hyperparam} shows that moderate increases in all three parameters improve performance, while larger values provide diminishing returns. Expanding more clusters improves results by exposing more diverse retrieved patterns and then stabilizes around 10 to 20 clusters. Increasing the number of variables per cluster expands search coverage but saturates near 50 samples. Widening the mask-infilling range consistently helps up to $[1,7]$, after which additional flexibility yields little gain. Based on these trends, we recommend 10 clusters, 50 variables per cluster, and a mask range of $[1,7]$ to balance performance and computation. 
\begin{figure}[t]
    \centering
    \includegraphics[width=1.\linewidth]{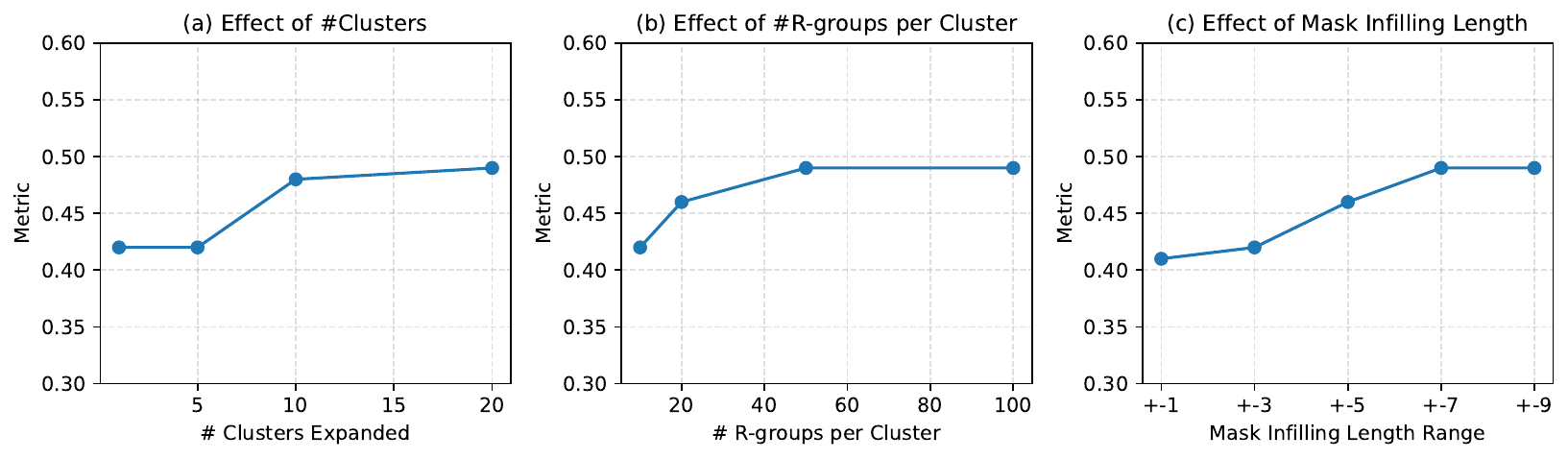}
    \vspace{-3mm}
    \caption{Hyperparameter Study. (a) the number of clusters to expand, (b) the number of variables to generate for each cluster, (c) the range of mask length to fill.}
    \label{fig:hyperparam}
\end{figure}

\subsection{Qualitative Evaluation}\label{sec:qualitative}

\begin{figure}[t]
    \centering
    \vspace{-3mm}
    \includegraphics[width=0.8\linewidth, trim={0 70mm 0 0}, clip]{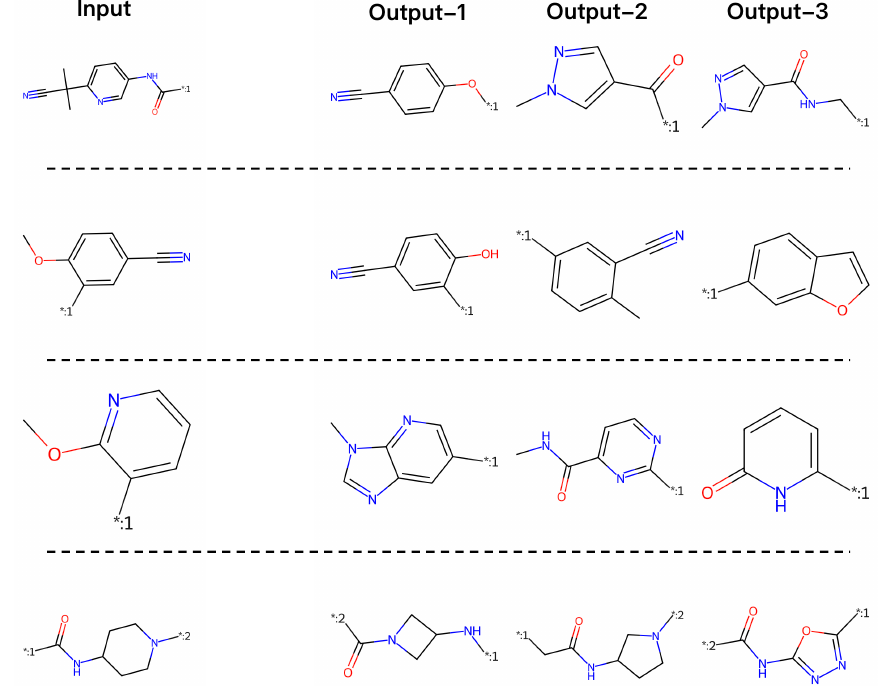}
    \caption{Examples of MMPT-FM generations. In each row, the left structure is the input variable, and the structures on the right are generated outputs.}
    \label{fig:vis_fm}
\end{figure}

\begin{figure}[t]
    \centering
    \includegraphics[width=0.9\linewidth, trim={0 0 0 0mm}, clip]{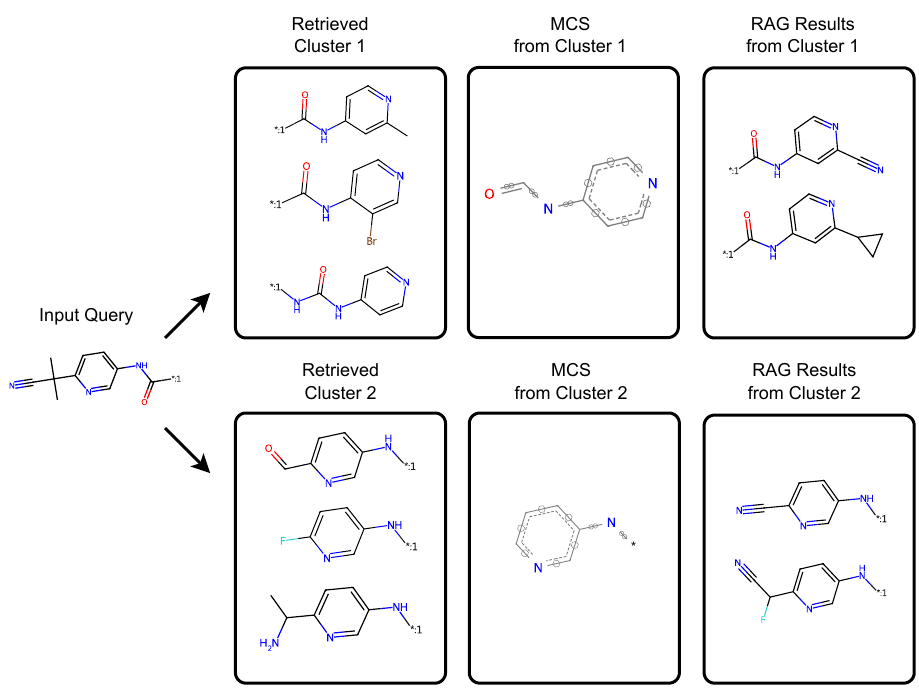}
    \caption{Examples of MMPT-RAG generations. Retrieved variables are clustered, an MCS is extracted per cluster, and generation is conditioned on each cluster's template (MCS).}
    \vspace{-3mm}
    \label{fig:vis_rag}
\end{figure}

To better understand the behavior of the proposed model beyond quantitative metrics, we present representative qualitative examples of generated variables. Fig.~\ref{fig:vis_fm} illustrates variables directly generated from MMPT-FM. For each input variable (left column of each row), the model produces multiple diverse and chemically plausible variants. The generated variables preserve valid valence patterns and maintain realistic functional groups and ring systems. Notably, the model naturally supports multiple attachment points, demonstrating its ability to handle context-dependent transformations and generate structurally coherent edits across different substitution sites. 

Fig.~\ref{fig:vis_rag} shows MMPT-RAG's generation results for the same input as the first example in Fig.~\ref{fig:vis_fm}. In this case, we first retrieve structurally similar historical variables from the database, then cluster the retrieved examples based on shared substructures. Two representative clusters are shown. For each cluster, we show theior Maximum Common Substructure (MCS), which serves as a structural template capturing the dominant transformation pattern within that cluster.
Conditioned on these cluster-specific templates, the generated outputs reflect the characteristic patterns of their corresponding clusters while remaining chemically valid and diverse. Compared to the standalone foundation model, RAG effectively steers outputs toward specific transformation families while preserving chemical plausibility.

\section{Conclusion}

In this work, we presented a paradigm shift in generative molecular design by reframing analog generation as a transformation-to-transformation task grounded in Matched Molecular Pair Transformations (MMPTs). Unlike traditional molecule-level approaches that often lack localized control, our framework explicitly models the precise chemical edits that define medicinal chemistry intuition.
By training a foundation model (\textbf{MMPT-FM}) on large-scale transformation data, we achieved scalable generation of variable substructures that balances chemical plausibility with structural novelty. To address the specific constraints of active drug discovery projects, we introduced \textbf{MMPT-RAG}, a retrieval-augmented framework that leverages external reference datasets to steer generation toward relevant, project-specific motifs. Our extensive evaluation on both general chemical corpora and time-split patent series demonstrates that this approach not only improves diversity and validity but effectively recovers prospective ligands in realistic discovery scenarios. Ultimately, this framework operationalizes MMPTs as a first-class generative abstraction, offering a powerful tool to synergize machine learning capability with human expertise.

\section{Limitations and Ethical Considerations}
Our approach relies on the availability and coverage of large historical transformation datasets, and its performance may vary in underrepresented chemical domains. Our framework is intended for research use, and does not introduce specific ethical concerns.

\newpage
\clearpage
\bibliographystyle{ACM-Reference-Format}
\bibliography{reference}

\newpage
\clearpage

\appendix
\section{Implementation Details}\label{append:implementation}

\textbf{Foundation model training and inference.} To obtain the foundation model, we leverage the ChemT5 model \cite{christofidellis2023unifying}, an encoder-decoder transformer pretrained on chemical datasets, as our base model. ChemT5 contains approximately 220 million parameters and has been fine-tuned for tasks in cheminformatics. It consists of 12 layers, 12 attention heads, 220M parameters, and processes input sequences up to 512 tokens \cite{christofidellis2023unifying}.
We employed standard supervised training to fine-tune all parameters of the base model. Teacher forcing \cite{williams1989learning} is incorporated to improve training stability. The training was conducted with a batch size of 64 on each device, and a learning rate of 5e-4. We use an early stop strategy with a tolerance of 2 epochs based on the evaluation loss. Utilizing four NVIDIA A6000 GPUs (48 GB each), the training process required approximately 70 hours to complete.
During inference with the foundation model, for each input, we use beam search to produce 1000 outputs with a maximum length of 50.

\textbf{Retriever.} 
We pre-build a nearest-neighbor index with HNSW \cite{malkov2018efficient} and query it with cosine distance over the Morgan Fingerprint \cite{morgan1965generation} embedding space. At inference time, we first retrieve at most top 500 nearest input variables, then expand each input into its associated label set. To ensure compatibility with the query, we filter candidates by the number of wild atoms (i.e., attachment points) to match the query variable. We then compute Morgan fingerprints and re-rank retrieved label candidates by Tanimoto similarity between the query and candidate labels, retaining the top set for downstream RAG steps.

\textbf{Clustering, Template Construction.} We cluster the retrieved labels in structure space using a shared-substructure clustering method, where we first compute pairwise similarities between retrieved outputs using the size of the RDKit maximum common substructure (MCS) normalized by the smaller molecule. We then perform agglomerative hierarchical clustering (average linkage) on the corresponding distance matrix, and cut the dendrogram at a similarity threshold of 0.70 to obtain clusters. To avoid overly coarse motifs, any cluster exceeding 10 molecules is recursively split using the same linkage procedure until all clusters satisfy the size constraint. For each retained cluster, we compute a Maximum Common Substructure (MCS), whose resulting SMARTS string serves as the cluster-invariant template.

We further convert this template into a partially specified output constraint by masking atoms outside the invariant scaffold. Concretely, we apply substructure masking to produce a template string where masked spans are denoted by a special masking character. We then convert each masked span into a single <BLANK> placeholder and perform span infilling using the generative model, as introduced in the prompted generation section.

\textbf{Prompted Generation via Mask Infilling.}
The infilling search is controlled by 11 maximum new tokens per blank, 200 maximum total candidate continuations, and 200 top candidates scored. The final RAG outputs are the union of valid, RDKit-parsable infilled candidates across templates. We exclude any duplicate sequences.

\textbf{Implementation of Compared Methods.}
For Database Retrieval, we first retrieve at most 50 input variables from the reference dataset that are most similar to the test query based on fingerprint similarity. We then collect all corresponding output variables paired with these retrieved inputs. If the resulting candidate set exceeds 1000 outputs, we retain the 1000 variables that are most similar to the test query to ensure a fair comparison in terms of candidate size. REINVENT4 (LibINVENT) follows a different formulation, generating variables conditioned on a scaffold rather than directly modeling MMPT-style variable-to-variable transformations. To align the setting with our task, we use the constant fragment identified by MMPDB as the scaffold input and generate up to 1000 candidate variables for each query. 

For consistency, all methods, including MMPT-FM and MMPT-RAG, generate 1000 candidates per input.

\section{Proof of Theorem~\ref{thm:steering}}\label{append:proof}
\begin{proof}
According to the workflow defined in Eq.~\ref{eq:rag-gen}, the MMPT-RAG framework constructs the global generation as a mixture of cluster-conditioned distributions with weights $\tilde{\pi}_k$. Summing over the variables $y$:
\begin{align*}
p_{\text{RAG}}(y \mid x) &= \sum_{k=1}^K \tilde{\pi}_k p(y \mid x, T_k) \\
&= \sum_{k=1}^K \tilde{\pi}_k \big[ (1 - \alpha_k) p_{\theta}(y \mid x) + \alpha_k p(y \mid T_k) \big] \\
&= p_{\theta}(y \mid x) \sum_{k=1}^K \tilde{\pi}_k (1 - \alpha_k) + \sum_{k=1}^K \tilde{\pi}_k \alpha_k p(y \mid T_k).
\end{align*}
Using $\sum \tilde{\pi}_k = 1$ and the definition $\bar{\alpha} = \sum \tilde{\pi}_k \alpha_k$, the first term simplifies to $(1 - \bar{\alpha}) p_{\theta}(y \mid x)$. The second term, by multiplying and dividing by $\bar{\alpha}$, recovers the effective reference $p_{\text{ref}}^*(y \mid x)$. Thus:
\[
p_{\text{RAG}}(y \mid x) = (1 - \bar{\alpha}) p_{\theta}(y \mid x) + \bar{\alpha} p_{\text{ref}}^*(y \mid x).
\]
\end{proof}

\end{document}